\newtheorem{example}{Example}
\newtheorem{definition}{Definition}
\newtheorem{lemma}{Lemma}
\newif\ifarxiv
\newcommand{\pageplan}[1]{{\small{\textit{\color{red!50!black} PP: #1}}}}
\renewcommand{\pageplan}[1]{}
\newcommand{\ignore}[1]{}
\newtheorem{sidenote}{Side note}
\newcommand{\variables}{\mathcal{X}}
\newcommand{\variablesV}{\mathcal{V}}
\newcommand{\domains}{\mathcal{D}}
\newcommand{\range}[2]{[#1,#2]}
\newcommand{\constraints}{\mathcal{C}}
\newcommand{\scope}[1]{\mathit{scope}(#1)}
\newcommand{\sols}[1]{\mathit{sols}(#1)}
\newcommand{\solsto}[2]{\mathit{sols}_{#2}(#1)}
\newcommand{\atomic}[1]{\ensuremath{\langle #1 \rangle}}
\newcommand{\constext}[1]{\textup{\mm{\textsc{#1}}}\xspace}
\newcommand{\consargs}[2]{\mm{\constext{#1}(#2)}\xspace}
\newcommand{\alldiff}[1]{\consargs{Distinct}{#1}}
\newcommand{\talldiff}{\constext{Distinct}}
\newcommand{\tcumulative}{\constext{Cumulative}}
\newcommand{\telement}{\constext{Element}}
\newcommand{\cpmpy}{CPMpy\xspace}
\newcommand{\minizinc}{MiniZinc\xspace}
\renewcommand{\implies}{\rightarrow}
\newcommand\muscall[2]{\call{MUS}(\mathit{soft}\!:\! #1, \mathit{hard}\!:\!#2)}
\newcommand\mm[1]{\ensuremath{#1}\xspace}
\newcommand\call[1]{\mm{\textsc{#1}}}
\newcommand{\integers}{\mathbb{Z}}
\newcommand{\set}[1]{\ensuremath{\{#1\}}}
\renewcommand{\implies}{\Rightarrow}
\newcommand{\clause}{\mm{\gamma}}
\newcommand{\pformat}{DRCP\xspace}
\newcommand{\cons}{\ensuremath{C}}
\newcommand{\reason}{\ensuremath{R}}
\newcommand{\req}{\mathit{Req}}
\newcommand{\step}[1]{\stepp{\cons_{#1}}{\reason_{#1}}}
\newcommand{\abstractproof}[1]{[\step{1}, \step{2}, \dots, \step{#1}]}
\newcommand{\mathabstractproof}[1]{$\abstractproof{#1}$\xspace}
\newcommand{\stepp}[2]{(#2, #1)} 
\newcommand{\cell}{\mathit{cell}}
\renewcommand{\cell}{x}
\newcommand{\benchformat}[1]{\textbf{#1}\xspace}
\newcommand{\benchsudoku}{\benchformat{Sudoku}}
\newcommand{\benchjobshop}{\benchformat{Job-shop}}
\newcommand{\benchdebug}{\benchformat{Modeling examples}}
\newcommand{\approachformat}[1]{\textbf{#1}\xspace}
\newcommand{\approachtrim}{\approachformat{Trim}}
\newcommand{\approachloc}{\approachformat{Min. Loc}}
\newcommand{\approachglob}{\approachformat{Min. Glob}}
\newcommand{\approachtrimloc}{\approachformat{Trim + Min. loc}}
\newcommand{\approachtrimglob}{\approachformat{Trim + Min. glob}}
\newcommand{\approachlocloc}{\approachformat{Min. loc + Min. loc}}
\newcommand{\approachglobloc}{\approachformat{Min. glob + Min. loc}}
\newcommand{\sota}{\approachformat{\citeauthor{BDGBG23SimplifyingStep-WiseExplanationSequences}~(\citeyear{BDGBG23SimplifyingStep-WiseExplanationSequences})}}
\renewcommand{\sota}{\approachformat{\citeauthor{BDGBG23SimplifyingStep-WiseExplanationSequences}}} 
\renewcommand{\sota}{\approachformat{SimplifyGreedy}}
\newcommand{\softcons}{\constraints_s}
\newcommand{\hardcons}{\constraints_h}
\newlist{EQ}{enumerate}{1}
\setlist[EQ]{label=\bfseries{}EQ\arabic*.,leftmargin=30pt}
\newcommand\setcitation[2]{%
	\csdef{mycommoncitation\text_uppercase:n{#1}}{#2}%
	\csappto{bbAllCommonCitations}{\cite{#2}\ }
}
\newcommand\getcitation[1]{%
	\csuse{mycommoncitation\text_uppercase:n{#1}}}
\newcommand\refto[1]{%
	\ifcsname  mycommoncitation\text_uppercase:n{#1}\endcsname%
	\getcitation{#1}%
	\else%
	#1%
	\fi%
}
\newcommand\mycite[1]{\cite{\refto{#1}}}
\title{Using Certifying Constraint Solvers for Generating Step-wise Explanations \ifarxiv(Extended version)\fi}
\author{
    Ignace Bleukx\textsuperscript{\rm 1},
    Maarten Flippo\textsuperscript{\rm 2},
    Bart Bogaerts\textsuperscript{\rm 1,3},
    Emir Demirovi\'{c}\textsuperscript{\rm 2},
    Tias Guns\textsuperscript{\rm 1}
}
\begin{document}

\maketitle

\begin{abstract} 
In the field of Explainable Constraint Solving, it is common to explain to a user why a problem is unsatisfiable.
A recently proposed method for this is to compute a sequence of explanation steps.
Such a step-wise explanation shows individual reasoning steps involving constraints from the original specification, that in the end explain a conflict.
However, computing a step-wise explanation is computationally expensive, limiting the scope of problems for which it can be used.
We investigate how we can use proofs generated by a constraint solver as a starting point for computing step-wise explanations, instead of computing them step-by-step.
More specifically, we define a framework of abstract proofs, in which \textit{both} proofs and step-wise explanations can be represented.
We then propose several methods for converting a proof to a step-wise explanation sequence, with special attention to trimming and simplification techniques to keep the sequence and its individual steps small.
Our results show our method significantly speeds up the generation of step-wise explanation sequences, while the resulting step-wise explanation has a quality similar to the current state-of-the-art.    
\end{abstract}

\ifarxiv
\else
\begin{links}
    \link{Code}{github.com/ML-KULeuven/Proof2Seq}
    \link{Extended version}{/axiv.org/xxxxxx}
\end{links}
\fi

\section{Introduction}

The ultimate goal of \textbf{declarative problem solving} is that users should be able to specify a problem declaratively (i.e., they only need to specify \emph{what} their problem is, not \emph{how} to solve it), and can then use a generic \emph{solver} to find solutions to their problem. 
Over the last decades, solvers \ignore{in various domains, including Constraint Programming (CP),} have become increasingly powerful, allowing them to solve complex, real-world problems with thousands of variables and constraints.
However, due to this complexity, \emph{explaining to a human} why a solver produced a particular answer is still a challenge.

\paragraph{Explanations}
Many types of user-oriented explanations have been developed for constraint programming~\cite{GGO21ExplanationConstraintSatisfactionSurvey}. 
We focus on explanations for problems that do not admit a solution.
More specifically, we focus on methods that explain ``why'' the problem is unsatisfiable.
A popular method is to extract a Minimal Unsatisfiable Subset (MUS) of the input constraints, which allows the user to focus on a problematic subset of the problem specification~\cite{M10MinimalUnsatisfiabilityModelsAlgorithmsApplicationsInvited}.
One issue, however, is that such an MUS can still be very large and hence hard to understand for a user.
To better explain the interaction between constraints, and building on old ideas~\cite{SF96Inference-BasedConstraintSatisfactionSupportsExplanation},
a framework of \textbf{step-wise explanations} was introduced by \citet{BGG21frameworkstep-wiseexplaininghowsolveconstraint}.
In this context, an explanation is a sequence of simple steps, where each step is a simple derivation based on some of the constraints specified by the user.
While the framework was originally conceived to explain satisfiable (puzzle) problems, it has also been used later to step-wise explain \emph{unsatisfiability} and \emph{optimality} \cite{BDGBG23SimplifyingStep-WiseExplanationSequences}. 
A major challenge in the computation of step-wise explanations is how to efficiently find a \emph{short} sequence of \emph{small} explanation steps.
Despite novel algorithms designed exactly for this purpose \cite{GBG23EfficientlyExplainingCSPsUnsatisfiableSubsetOptimization}, \textbf{generating a step-wise explanation remains computationally very expensive}.
Indeed, current approaches rely on multiple calls to an NP-oracle for computing even a single explanation step, and many more so for computing the entire sequence.
The high runtime of existing approaches limits the general application of step-wise explanation methods.
Our goal is to reduce the computational effort for computing a step-wise explanation sequence, by starting from a solver-generated proof instead of constructing it step-by-step.

\paragraph{Proof Logging} 
While combinatorial optimization tools are often used as reliable components in larger systems, their results may not always trustworthy. 
Indeed, there have been numerous reports of solvers reporting faulty answers \cite[e.g.,][]{BB09FuzzingDelta-DebuggingSMTSolvers}. 
The question that naturally arises is: How can we be sure that the answer of a solver is correct? More specifically, if a solver claims that a problem is unsatisfiable or that a solution is optimal, how can we know this is indeed the case? 
One way to provide such a guarantee is to develop \textbf{certifying algorithms} \cite{MMNS11Certifyingalgorithms}, which is also known as  \textbf{proof logging} in the context of combinatorial optimization.
The core idea is that solvers provide a \emph{proof} of correctness of their answer.
This approach has been applied very successfully in SAT, with DRAT being the dominant proof logging format \cite{WHH14DRAT-trimEfficientCheckingTrimmingUsingExpressive}.
Recently, proof logging has also found its way to other combinatorial solving formalisms, including MaxSAT~\cite{VDB22QMaxSATpbCertifiedMaxSATSolver,BBNOV23CertifiedCore-GuidedMaxSATSolving,BBNOPV24CertifyingWithoutLossGeneralityReasoningSolution-Improving,VCB26CertifiedBranch-and-BoundMaxSATSolving}, pseudo-Boolean optimization \cite{KLMNOTV25PracticallyFeasibleProofLoggingPseudo-BooleanOptimization,IVSBBJ26EfficientReliableHitting-SetComputationsImplicitHitting} and CP~\cite{FSIJE24MultiStageProofLoggingFramework,GMN22AuditableConstraintProgrammingSolver}.

\paragraph{Goal: Computing Step-wise Explanations from Proofs} 
There are several similarities, but also some crucial differences between \emph{proofs} generated by certifying solvers and a \emph{step-wise explanation sequence}. 
These differences arise because proofs are designed to be \textbf{machine-verifiable}, while step-wise explanations are designed to be \textbf{human-interpretable}.
Both concepts explain the explanandum through a chain of relatively \textit{simple} steps. 
However, in proofs, these small steps derive new, potentially complex constraints (e.g., a long clause) that can be reused as part of later proof steps.
In contrast, in a step-wise explanation, each step derives a simple \textit{fact}, such as a value assignment of a decision variable by only combining \emph{a few} constraints specified by the user and previously derived facts~\cite{FDGBG26PreferenceElicitationStep-WiseExplanationsLogicPuzzles}.
Hence, solver-generated proofs do not directly provide a user-oriented step-wise explanation.

Another important difference is \emph{how} they are generated. 
Since proofs are generated efficiently during the solving process, a natural question is whether one can \textbf{reuse such a proof to generate a step-wise explanation sequence}. 
In this paper, we do exactly that and contribute the following:

\begin{itemize}
    \item We propose a general framework for unsatisfiability proofs that captures both \emph{proofs} and \emph{explanations};
    \item we propose a suite of techniques which allow to efficiently extract a step-wise explanation from a proof, with extra attention to minimization techniques that keep the step-wise explanations small (and comprehensible) and;
    \item we evaluate our approach on several application domains, 
    showing our approach computes step-wise explanations up to 100 times faster, with a limited effect on the length of the explanation and its individual steps.
\end{itemize}

\newcommand\CSP{(\variables, \domains, \constraints)\xspace}

\section{Background}

A Constraint Satisfaction Problem (CSP) is a triple $\CSP$ with $\variables$ a set of integer variables $\domains$ a set of domains (one for each variable), and $\constraints$ a set of constraints \cite{RvW06HandbookConstraintProgramming}.
When the set of variables and domains is clear from the context, we write $\constraints$ instead of $\CSP$.
Boolean variables are represented as 0-1 integer variables.
A constraint in this paper is \textit{any} Boolean expression over $\variables$, which maps variable assignments to true or false, and is written as a mathematical expression.
E.g., $x + y \leq 3$ and $(x = 3) \vee (y \geq 2)$ both represent a constraint.
Variables that occur in the arguments of a constraint are said to be in the constraint's \emph{scope}.
We write $\scope{c}$ to denote the set of variables occurring in $c$.
The scope trivially generalizes to sets of constraints.
An assignment $\alpha$ \emph{satisfies} a constraint when the constraint maps it to true.
An assignment that satisfies every constraint in $\constraints$ is a \emph{solution} of the CSP.
If no solution to the CSP exists, it is said to be \emph{unsatisfiable}.
The set of solutions to a CSP is written as $\sols{\constraints}$.
The set of solutions to a CSP, projected to a (sub)-set of variables $\variablesV$ is written as $\solsto{\constraints}{\variablesV}$.
A set of constraints $\constraints'$ is \emph{implied by} 
$\constraints$ if every solution of $\constraints$ is also a solution of $\constraints'$. I.e., $\sols{\constraints} \subseteq \sols{\constraints'}$.
We write this as $\constraints \models \constraints'$, and call $\constraints'$ a \emph{logical consequence} of $\constraints$.
Now, $\constraints'$ is a logical consequence of $\constraints$ if and only if $\constraints \cup \set{\bigvee_{c' \in \constraints'} \neg c'}$ is unsatisfiable. 

Typically, a user \emph{models} a \emph{user-level} CSP in a constraint modeling system, such as \minizinc~\mycite{minizinc} or \cpmpy~\cite{cpmpy}, where they can use any Boolean expression as a constraint.
This allows the user to focus on the modeling task at hand, without having to take the specifics of the solver into consideration.
Next, they choose a general-purpose solver to actually solve the problem. 
To enable this, the modeling system \emph{transforms} the \textit{user-level} CSP to an equivalent, \emph{solver-level} CSP, using only constraints that are directly supported by the solver (e.g., using flat (global) constraints for CP solvers, or using linear inequalities for MIP solvers~\cite{BSTW16ImprovedLinearizationConstraintProgrammingModels}).
During this transformation, the modeling system may introduce \emph{auxiliary variables}.
Such variables do not represent an entity in the \emph{user-level} CSP, and a user typically does not care about their value.
We assume the transformations implemented in the constraint modeling system do not remove nor introduce new solutions when projected to the original decision variables, 
i.e., $\solsto{\constraints}{\variables} = \solsto{\mathit{transform}(\constraints)}{\variables}$.
\ifarxiv
\Cref{example:transform} in the appendix shows an example of such a transformation.
\fi
When providing the user with an explanation, this should be in terms of \emph{user-level} variables and constraints.
E.g., the explanation should not contain any statement regarding auxiliary variables, as the user does not know their meaning.

CP solvers solve a CSP using \emph{propagation-based solving}.
That is, the solver \emph{filters} or \emph{propagates} the domains of the decision variables based on the semantics of the constraint.
Propagators can have different levels of consistency, with \emph{domain consistent} propagators being the most powerful in terms of propagation strength, often at the cost of a higher runtime of the propagator~\cite{B06ConstraintPropagation}.
We write the propagation function of a solver-level constraint $R$ as $f_R$.

Throughout this paper, we will often be interested in subsets of constraints that are unsatisfiable.
We formalize such subsets using the notion of a Minimal Unsatisfiable Subset (MUS)~\cite{lynce2004computing}.
\begin{definition}[Minimal Unsatisfiable Subset]
Given a partitioning of a set of constraints $\constraints$ into soft constraints $\softcons$ and hard constraints $\hardcons$, a Minimal Unsatisfiable Subset (MUS) is a subset $M \subseteq \softcons$ such that $M \cup \hardcons$ is unsatisfiable and for any strict subset $M' \subsetneq M$, $M' \cup \hardcons$ is satisfiable.    
\end{definition}

We write $\muscall{\softcons}{\hardcons}$ to indicate we compute an MUS for soft constraints $\softcons$ and hard constraints $\hardcons$.

\section{A Framework for Proofs of Unsatisfiability}

As a first contribution, we introduce a unifying framework to describe a proof of why a CSP is unsatisfiable.
This framework will allow us to specify differences and similarities between solver-generated proof logs and user-oriented step-wise explanations.
We re-interpret and formalize both using the uniform concept of \emph{abstract proofs}.
An abstract proof is a sequence of proof steps where each proof step derives new constraints $\cons$, using a set of reasons $\reason$ which are either part of the input CSP or are previously derived in the proof.

\begin{definition}[Abstract proof]
Given a CSP $\CSP$, an \emph{abstract proof} is a sequence of pairs $\stepp{\cons_i}{\reason_i}$ with $\cons_i$ any set of constraints derived by the proof step, and $\reason_i$ their reasons, another set of constraints.
An abstract proof is valid if for each step, the derived constraint is implied by its reasons ($\reason_i \models \cons_i$) and for each step $i$, $\reason_i \subseteq \constraints \cup \bigcup_{1 \leq j < i} \cons_j$.
\end{definition}

We will omit curly brackets when the derived set of constraints is a singleton set.
An abstract proof of length $n$ proves the unsatisfiability of the CSP if $\bot \in C_n$, i.e., if the last step in the proof derives the ``false'' literal.

The concept of an abstract proof is, of course, very general, and any proof of unsatisfiability can be represented in it.
However, by imposing different restrictions on the content of the proof-steps, an abstract proof instantiates a Deletion Reverse Constraint Propagation (\pformat) prooflog or a step-wise explanation sequence, as we will show next.

\begin{subsection}{DRCP as abstract proof}
If a constraint solver concludes that a solver-level CSP is unsatisfiable, it may produce a proof supporting that conclusion. 
Here we describe the \pformat format~\cite{FSIJE24MultiStageProofLoggingFramework}, which was designed to represent the behavior of Lazy Clause Generation Constraint Programming solvers.

The smallest unit used in a \pformat proof is the atomic constraint: a statement about the domain of a single variable.
\begin{definition}[Atomic Constraint]
    An atomic constraint ${\atomic{x \diamond v}}$ is a comparison between a variable and a constant.
    I.e., $x \in \variables$, $\diamond\in\set{\leq, \geq, =, \neq}$ and $v \in \integers$.
\end{definition}

Each step in a \pformat proof may be interpreted as an abstract proof-step $\stepp{\clause}{\reason}$ with \clause a \emph{disjunction} of  \emph{atomic constraints}, which we will call a \emph{clause}.
Based on the content of $\reason$, each step in the proof can be categorized as an \emph{inference} or a \emph{nogood-deriving step}.

\begin{definition}[Inference proof step]
A proof step $\stepp{\clause}{\reason}$ is an inference step if $\clause$ is a clause of atomic constraints and $\reason$ is a singleton set with a solver-level constraint.
\end{definition}
Inference proof-steps represent the result of a \emph{constraint propagation}.
To allow the proof to be efficiently checked by an external verifier, an inference step also includes which filtering algorithm was used to derive \clause from $\reason$. 
An inference is valid with respect to propagator $f_\reason$ if $f_\reason(\neg \clause) = \bot$.

\begin{definition}[Nogood deriving step]
A proof step $\stepp{\clause}{\reason}$ is a nogood learning step if $\clause$ is a clause of atomic constraints, and $\reason$ is a set of previously derived clauses.
\end{definition}
A nogood-deriving step represents a clause (sometimes also called nogood) that is \emph{learned} by the solver, after it found a conflict in a branch of the search-tree~\cite{MLM21Conflict-DrivenClauseLearningSATSolvers,LSTReasoningFromLastConflictConstraintProgramming}.

Sometimes, not all steps stored in a proof are strictly necessary.
This is, for example, the case when a particular step is never used as a \emph{reason} to derive a later step.
In a trimmed proof, as defined next, such steps do not occur.

\begin{definition}[Trimmed abstract proof]
    An abstract proof $[\stepp{\cons_1}{\reason_1}, \stepp{\cons_2}{\reason_2}, \dots, \stepp{\bot}{\reason_n}]$ is trimmed, if each $c \in \cons_i$ appears in at least one reason $R_j$ later in the proof.
\end{definition}

Trimming can be implemented efficiently when the proof format stores the set of reasons used to derive a constraint explicitly~\cite{CHHKS17EfficientCertifiedRATVerification}, such as in DRCP.

\end{subsection}

\subsection{Step-wise explanations as abstract proofs}

The \emph{step-wise explanation framework} was introduced by \citet{BGG21frameworkstep-wiseexplaininghowsolveconstraint} for explaining how to find the unique solution of a logic puzzle using simple derivations and was recently extended to be used in the context of explaining unsatisfiable CSPs~\cite{BDGBG23SimplifyingStep-WiseExplanationSequences}.
A step-wise explanation of a user-level CSP is an abstract proof where in each proof step $\stepp{\cons}{\reason}$, the derived constraint is a set of atomic constraints (interpreted as a conjunction); and the set of reasons consists of \emph{user-level} constraints, and constraints derived by earlier steps.

\begin{definition}[Explanation step] 
Given a user-level CSP $(\variables,\domains,\constraints)$, an \emph{explanation step} is an abstract proof step $\stepp{\cons_i}{\reason_i}$ where $\cons_i$ is a set of atomic constraints.
\end{definition}

Typically, we are not interested in arbitrary explanation steps, but have a preference for \emph{simple} steps.
That is, steps in which only a few user-level constraints from the CSP are required to derive the set of new atomic constraints.
Most algorithms for computing explanation steps find a minimum set of user constraints to use in a single step.

\begin{example}
\Cref{fig:minimize} shows three alternative explanation steps for a Sudoku, explaining why the green cell must be a 4 (i.e., $\cons = \cell_{3,1} = 4$.).
In the leftmost step, its reasons are $\reason = \set{\alldiff{\mathit{row}_3}, \alldiff{\mathit{col}_2}, \cell_{2,2} = 4,\dots}$
\end{example}

We can build a step-wise explanation sequence by combining such explanation steps.
In the literature, several methods for generating such explanation sequences exist~\cite{BGG21frameworkstep-wiseexplaininghowsolveconstraint,BDGBG23SimplifyingStep-WiseExplanationSequences,GBG23EfficientlyExplainingCSPsUnsatisfiableSubsetOptimization}.
However, they suffer from scalability issues when many \emph{facts} can or need to be explained, or when many constraints need to be used in a single step.

To summarize, the main difference between DRCP-proof steps and explanation steps are 1) the type of constraints used as reasons (solver-level vs user-level) and 2) the type of constraints derived in a proof step (disjunction vs conjunction).
In the next section, we will explore methods that allow us to transform a DRCP-proof into a step-wise explanation.

\begin{section}{DRCP-proof to Step-wise explanation} \label{sec:convert}
We now present our second contribution and propose a set of techniques for processing abstract proofs.
By applying these techniques, an abstract proof corresponding to a \pformat proof can be transformed into a step-wise explanation.

\subsection{Proof simplification} \label{sec:squashing}
To transform a DRCP proof into a step-wise explanation, we will need to gradually restrict what is allowed in the proof, and rewrite an abstract proof into an equivalent restricted proof without altering the validity of the proof. 
For this, we introduce the concept of an abstract P-proof:
\begin{definition}[Abstract P-proof]
An abstract P-proof is an abstract proof \mathabstractproof{n} where property P holds for every step $\stepp{\cons_i}{ \reason_i}$ in the proof.
\end{definition}

If a proof step $\stepp{\cons}{\reason}$ does not satisfy property $P$, we remove it from the proof, and update all steps involving $\cons$ in its reason.
That is, any other step $\stepp{C'}{R'}$ which uses $\cons$ in its reason, (i.e., $C \cap R' \neq \emptyset$), can be replaced by the step $\stepp{C'}{(R' \setminus C) \cup {R}}$.
In what follows, we illustrate two cases where we can apply this simplification to transform a proof log into a step-wise explanation.

\paragraph{Simplification over auxiliary variables}

In a step-wise explanation, each step derives atomic constraints about the variables in the user-level CSP.
However, as a DRCP proof is a proof of the \emph{solver-level} CSP, its derived constraints may contain auxiliary variables introduced by the modeling system.
Using proof simplification, we remove any proof step that derives a constraint over such auxiliary variables.
That is, we convert the proof to an abstract P-proof with $P(\stepp{\cons}{\reason}) \coloneq \scope{C} \subseteq \variables$. 

\begin{example}[Removing a step with an auxiliary variable]
Consider the following constraint $a_1 \implies x+4 \leq y$ as a result of transforming the user-level constraint $(x + 4 \leq y) \vee (y + 6 \leq x)$.
A proof-step involving this constraint is $\stepp{a_1 = 0 \vee x \neq 3 \vee y \neq 2}{\set{a_1 \implies x + 4 \geq y}}$.
Here, $a_1$ was not part of the input CSP and was introduced by the modeling system as part of a transformation.
Hence, we remove the step, and in any later proof step, we replace the derived clause with the constraint $a_1 \implies x+4 \leq y$.
\end{example}

For abstract proofs certifying unsatisfiability, this is always possible as $\scope{\bot} = \emptyset \subseteq \variables$.
Hence, in the unlikely case where each proof step derives a constraint involving an auxiliary variable, the proof will be simplified to a trivial proof with one step, containing a set of solver-level constraints that derive $\bot$.

\paragraph{Simplification to domain reductions}
In the step-wise explanation framework, each intermediate result is a set (i.e., a conjunction) of atomic constraints.
That is, the derived constraints in each explanation step describe a set of allowed domain values for the variables in the CSP.
An abstract proof originating from a \pformat proof may also contain such intermediate results describing a domain.
In particular, any clause of atomic constraints over the same variable in the CSP describes a domain.
Hence, any step in a proof that derives such a clause can be part of a step-wise explanation.
We convert the abstract proof to an abstract P-proof, with $P(\stepp{\cons}{\reason}) \coloneq |\scope{\cons}| \leq 1$. 
For proofs of unsatisfiability, this is possible as $\scope{\bot} = \emptyset$.
Thus, in the unlikely case no proof step in the given proof contains a single domain reduction as a derived constraint, the proof can again be simplified to a trivial proof with 1 proof step.

\subsection{Solver-level inferences to user-level inferences} \label{sec:inferences}
Each \emph{inference} step in a solver-generated proof is a \emph{solver-level constraint}, e.g., a linear inequality or a clause, depending on the proof format~\cite{WHH14DRAT-trimEfficientCheckingTrimmingUsingExpressive}.
In a step-wise explanation sequence, each step consists of a set of \emph{user-level} constraints.
However, multiple solver-level constraints may correspond to one user-level constraint. 

By keeping track of which user-level constraint \ignore{constraints} generate a certain solver-level constraint, we can replace any solver-level constraints in the reasons of a proof, if the derived constraint $\cons$ does not involve an auxiliary variable.
Indeed, if the user-level constraint is \emph{stronger} than or equally strong as the solver-level constraint given decision variables $\variables$, we can safely replace the solver-level constraint with the user-level constraint, as formalized by the generalized Lemma \ref{lemma:userlevel}.

\begin{lemma}[Validity of user-level constraints in proof steps]\label{lemma:userlevel}
    Given a valid proof step $\stepp{\cons}{\reason}$ with $s \in \reason$, and $\scope{\cons} \subseteq \variables$.
    If $\solsto{c}{\variables} \subseteq \solsto{s}{\variables}$, then ${\stepp{\cons}{\reason \setminus \set{s} \cup \set{c}}}$ is a valid proof step.
\end{lemma}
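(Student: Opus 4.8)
The plan is to unfold the definition of a valid proof step and reduce the claim to a single inclusion between (projected) solution sets, then exploit the monotone, conjunctive nature of solution sets. First I would recall that a step $\stepp{\cons}{\reason}$ is valid precisely when $\reason \models \cons$, i.e.\ $\sols{\reason} \subseteq \sols{\cons}$. Writing $\reason_0 \coloneq \reason \setminus \set{s}$, the hypothesis reads $\sols{\reason_0 \cup \set{s}} \subseteq \sols{\cons}$ and the goal is $\sols{\reason_0 \cup \set{c}} \subseteq \sols{\cons}$. Because $\scope{\cons} \subseteq \variables$, whether an assignment satisfies $\cons$ depends only on its restriction to $\variables$; hence validity is equivalent to the corresponding inclusion of $\variables$-projected solution sets, both for the hypothesis and for the goal. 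So it suffices to prove the projected inclusion
\[
    \solsto{\reason_0 \cup \set{c}}{\variables} \;\subseteq\; \solsto{\reason_0 \cup \set{s}}{\variables},
\]
after which chaining with the projected form of the original validity, $\solsto{\reason_0 \cup \set{s}}{\variables} \subseteq \solsto{\cons}{\variables}$, closes the argument by transitivity.

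To establish this projected inclusion I would take an arbitrary assignment $\alpha$ satisfying $\reason_0 \cup \set{c}$ and show that $\alpha$ restricted to $\variables$ also lies in $\solsto{\reason_0 \cup \set{s}}{\variables}$. Since $\alpha$ satisfies $c$ and $\scope{c} \subseteq \variables$, we have $\alpha|_{\variables} \in \solsto{c}{\variables} \subseteq \solsto{s}{\variables}$, so there is an assignment $\beta$ with $\beta|_{\variables} = \alpha|_{\variables}$ that satisfies $s$. It then remains to merge $\alpha$ (which satisfies $\reason_0$) with $\beta$ (which satisfies $s$) into one assignment that satisfies all of $\reason_0 \cup \set{s}$ while still agreeing with $\alpha$ on $\variables$.

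The hard part will be exactly this merging step, and it is where the auxiliary variables enter. The subtlety is that $s$ is a \emph{solver-level} constraint and may speak about auxiliary variables, whereas the condition $\solsto{c}{\variables} \subseteq \solsto{s}{\variables}$ only controls behaviour on $\variables$; if $\reason_0$ and $s$ shared an auxiliary variable on which $\alpha$ and $\beta$ disagreed, a naive combination could break a constraint of $\reason_0$. The clean way to close the gap is to rely on how the modeling system introduces auxiliary variables: those occurring in $s$ are introduced freshly for the transformation of $c$ and do not occur in $\reason_0$ or in $\cons$. Under this, I would define the merged assignment $\gamma$ to agree with $\beta$ on the auxiliary variables of $s$ and with $\alpha$ everywhere else; then $\gamma$ satisfies $\reason_0$ (whose scope is untouched) and $s$ (whose scope is covered by $\beta$), and $\gamma|_{\variables} = \alpha|_{\variables}$, yielding $\alpha|_{\variables} \in \solsto{\reason_0 \cup \set{s}}{\variables}$ as required.

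In the degenerate reading where no auxiliary variables are present, the merge is vacuous and the whole lemma collapses to the one-line monotonicity fact $\sols{\reason_0} \cap \sols{c} \subseteq \sols{\reason_0} \cap \sols{s}$, which is immediate from $\sols{c} \subseteq \sols{s}$; the projection and the scope hypothesis $\scope{\cons} \subseteq \variables$ are precisely the ingredients needed to recover this conclusion in the general, auxiliary-variable setting.
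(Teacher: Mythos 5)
Your proof follows the same skeleton as the paper's: validity of a step means $\reason \models \cons$; since $\scope{\cons} \subseteq \variables$ this is equivalent to an inclusion of $\variables$-projected solution sets; and the conclusion follows by transitivity once one establishes the key inclusion $\solsto{\reason \setminus \set{s} \cup \set{c}}{\variables} \subseteq \solsto{\reason}{\variables}$. Where you genuinely diverge is in how that inclusion is handled. The paper asserts it in one line as an immediate consequence of $\solsto{c}{\variables} \subseteq \solsto{s}{\variables}$; you instead prove it by an explicit merge of assignments, and in doing so you correctly identify that the step is \emph{not} automatic: projection does not commute with conjunction, so the hypothesis constrains $s$ only on $\variables$ and says nothing about how $s$ interacts with auxiliary variables shared with $\reason \setminus \set{s}$. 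Your worry is substantive: take $\reason \setminus \set{s} = \set{a = 1}$ and $s = (a = 0)$ for an auxiliary variable $a$, and let $c$ be a tautology over $\variables$; then $\solsto{c}{\variables} \subseteq \solsto{s}{\variables}$ holds and $\reason$ is unsatisfiable (so the original step is valid for any $\cons$ with $\scope{\cons} \subseteq \variables$), yet $\reason \setminus \set{s} \cup \set{c}$ need not imply $\cons$. So the lemma, read literally, requires exactly the freshness condition you introduce---that the auxiliary variables of $s$ occur neither in $\reason \setminus \set{s}$ nor in $\cons$---and that is indeed the regime in which the paper applies it: auxiliary variables are introduced fresh per transformed user constraint, and all solver-level constraints stemming from the same user constraint are mapped to (and replaced by) that single constraint. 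In short, your argument is a correct and more careful version of the paper's: the paper's one-line justification of the middle inclusion silently relies on the assumption you made explicit, and your merge construction (taking $\beta$ on the auxiliary variables of $s$ and $\alpha$ elsewhere) is precisely what closes that gap.
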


\begin{example}[Replacing a solver-level constraint]
Consider the constraint $x \neq y$ as part of the decomposition of the user-level global constraint $\alldiff{x,y,z}$.
The step $\stepp{(x \neq 1) \vee (y \neq 1)}{\set{x \neq y}}$ can then safely be replaced with the step $\stepp{(x \neq 1) \vee (y \neq 1)}{\set{\alldiff{x,y,z}}}$.
\end{example}

Note, however, that this simplification is only valid if no auxiliary variables occur in the derived constraint $\cons$.
Indeed, a user-level constraint cannot logically imply a statement about an auxiliary variable introduced by the modeling system.
This means that each proof step $\stepp{\cons}{\reason}$ where an auxiliary variable is part of $\cons$ should be simplified away \textit{before} replacing solver-level constraints with a user-level ones in the proof.
This can be done using the proof simplification technique described in \Cref{sec:squashing}.

\subsection{Reason minimization} \label{sec:implicationgraph}
In any abstract proof, including a step-wise explanation, some proof steps may contain more information than strictly necessary.
More precisely, some \emph{reasons} may be redundant for deriving a particular derived constraint.
In a user-oriented step-wise explanation, such redundant information is unwanted, as it requires more cognitive effort to comprehend.
In \Cref{algo:minimize}, we present a method that minimizes the set of reasons for each step in an abstract proof.

\newcommand{\mode}{\mathit{mode}}
\newcommand{\cand}{\mathit{cand}}

\begin{algorithm}
\caption{\call{MinimizeReasons}}
\label{algo:minimize}
\begin{algorithmic}[1]
    \State \textbf{Input:} proof \mathabstractproof{n}, $\mode$
    \State \textbf{Output:} trimmed abstract proof with subset-minimal reasons
    \State $\mathcal{P} \gets []; \req \gets \cons_n;$
    \For {$i = n..1$} \Comment{from back to front}
        \If{$\cons_i \cap \req = \emptyset$} \textbf{continue} \EndIf
        \If{$\mode =$ local}
        $\cand \gets \reason_i$
        \EndIf
        \If{$\mode =$ global}
        $\cand \gets \constraints \cup \bigcup_{j < i} \cons_j $
        \EndIf
        \State $\reason_i' \gets \muscall{\cand}{\neg\cons_i}$ \label{line:muscall}
        \State $\req \gets \req \cup \reason_i'$
        \State Add step $\stepp{\cons_i}{\reason_i'}$ at the start of $\mathcal{P}$
    \EndFor
\State \Return $\mathcal{P}$
\end{algorithmic}
\end{algorithm}

The main idea of the algorithm is to traverse the proof from back to front, and keep a set of \emph{required} ($\req$) reasons.
That is, $\req$ contains at any point the set of reasons that have to be derived by previous steps.
If we encounter a step that is not required, it can be discarded from the proof.
Once the set of required reasons is empty, we are certain that all steps in the proof have a minimal set of reasons.

\begin{figure*}[ht]
\centering
\begin{tikzpicture}[node distance=0.1cm and 0.5cm,
    startstop/.style={rectangle, rounded corners, minimum height=1cm, text centered, draw=black, fill=blue!40},
    process/.style={rectangle, rounded corners, minimum height=1cm, text centered, draw=black, fill=gray!40},
    processyellow/.style={rectangle, rounded corners, minimum height=1cm, minimum width=2.5cm, text centered, draw=black, fill=yellow!40},
    trim/.style={rectangle, rounded corners, minimum height=0.7cm, text centered, draw=black, fill=pink!40},
    squash/.style={rectangle, rounded corners, minimum height=1cm, text centered, draw=black, fill=green!20},
    decision/.style={rectangle, rounded corners, minimum height=0.7cm, minimum width=2.2cm, text centered, draw=black, fill=red!20},
    toseq/.style={rectangle, rounded corners, minimum height=1cm, text centered, draw=black, fill=brown!40},
    arrow/.style={thick,->,>=stealth}
]

\node (solve) [startstop, text width=1.5cm] {Solve + Parse proof};
\node (auxvars) [squash, right=of solve, text width=2cm] {Simplify to user-variables};
\node (solvercons) [process, right=of auxvars, text width=2cm] {User-level constraints};
\node (trimnode) [trim, right=of solvercons] {Trim};
\node (squashnode) [squash, right=of trimnode, xshift=1cm, text width=2cm] {Simplify to domain reductions};

\node (minlocal) [trim, above left=of squashnode, xshift=1cm] {Trim with local minimization};
\node (minglobal) [trim, below left=of squashnode, xshift=1cm] {Trim with global minimization};
\node (minlocall) [decision, above right=of squashnode,xshift=-1.2cm] {Trim with local minimization};
\node (mingloball) [decision, below right=of squashnode,xshift=-1.2cm] {Trim with global minimization};
\node (toseq) [toseq, right=of squashnode,text width=1.8cm,xshift=2.5cm] {Merge steps and print explanation};

\draw [arrow, line width=1.5] (solve) -- (auxvars);
\draw [arrow, line width=1.5] (auxvars) -- (solvercons);
\draw [arrow] (solvercons) -- (minglobal);
\draw [arrow] (solvercons) -- (minlocal);
\draw [arrow, line width=1.5] (solvercons) -- (trimnode);
\draw [arrow, line width=1.5] (trimnode) -- (squashnode);
\draw [arrow] (minlocal) -- (squashnode);
\draw [arrow] (minglobal) -- (squashnode);
\draw [arrow, line width=1.5] (squashnode) -- (toseq);
\draw [arrow] (squashnode) -- (minlocall);
\draw [arrow] (squashnode) -- (mingloball);
\draw [arrow] (minlocall) -- (toseq);
\draw [arrow] (mingloball) -- (toseq);

\end{tikzpicture}
\caption{Overview of the pipeline for generating a step-wise explanation sequence from a proof. }
\label{fig:pipeline}
\end{figure*}
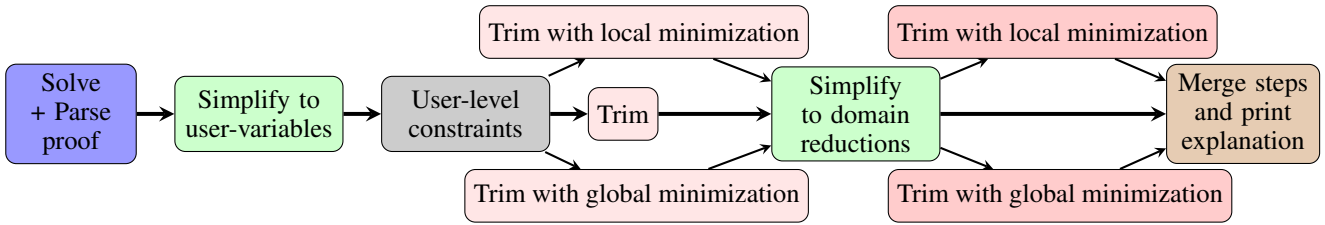

For each step, the set of minimal reasons is computed using an MUS algorithm (line \ref{line:muscall}).
Our algorithm is agnostic to the exact method that is used for computing MUSes, and several algorithms for computing such MUSes are available in the literature~\cite{M10MinimalUnsatisfiabilityModelsAlgorithmsApplicationsInvited}.
We use an algorithm that finds an MUS which minimizes the set of user-level constraints~\cite{IPLM15SmallestMUSExtractionMinimalHittingSet}.
We consider two versions of minimization \cref{algo:minimize}: local or global.
 
With \emph{local minimization}, we only consider the set of given reasons of a proof step as candidate reasons for the minimized step.
This mode of the algorithm can be interpreted as a strict \emph{trimming} algorithm as it is agnostic to the inference algorithms used to construct the proof.
Indeed, while the solver may require several input constraints or intermediate steps to derive a new constraint, not all of these steps or constraints may be strictly required.
This form of trimming can considerably reduce proof size, especially when the propagation algorithms for some constraints are not domain consistent, or when a user-level constraint is \textit{decomposed} into several smaller constraints before solving.

With \emph{global} minimization, we consider \emph{all} user-level constraints in the CSP, and previously derived constraints as candidate reasons.
This setting has the advantage that the algorithm is less restricted in its choice of reasons, potentially leading to shorter or simpler proofs.
However, this comes at the potential downside of higher computation times of the MUS algorithm.
Indeed, because the set of candidates is larger, computing an MUS is conceptually more expensive compared to the local version of the algorithm.

\begin{example}[Local and Global Minimization]
Consider the middle proof step in \Cref{fig:minimize}.
This step needlessly highlights the top-right 3. 
Local minimization will always exclude the constraint $\cell_{1,4} = 3$ from its reasons.

The rightmost proof step cannot be minimized locally as it already uses a subset-minimal set of reasons.
However, by considering all possible reasons, we can minimize it to the leftmost proof-step, where we use \underline{different} Sudoku rules, while deriving the same constraint $\cell_{3,1} = 4$.
\end{example}

\newcommand{\figwidth}{0.26\linewidth}
\begin{figure}[ht]
    \centering
    \begin{minipage}[b]{\figwidth}
         \includegraphics[width=\textwidth]{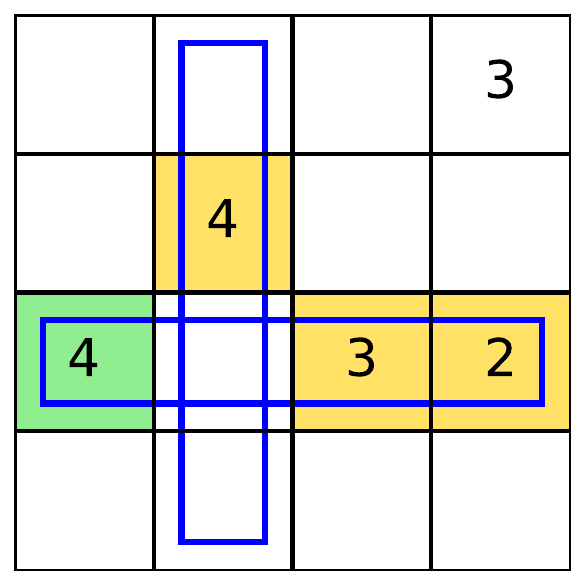}
    \end{minipage}
    \begin{minipage}[b]{\figwidth}
    \includegraphics[width=\textwidth]{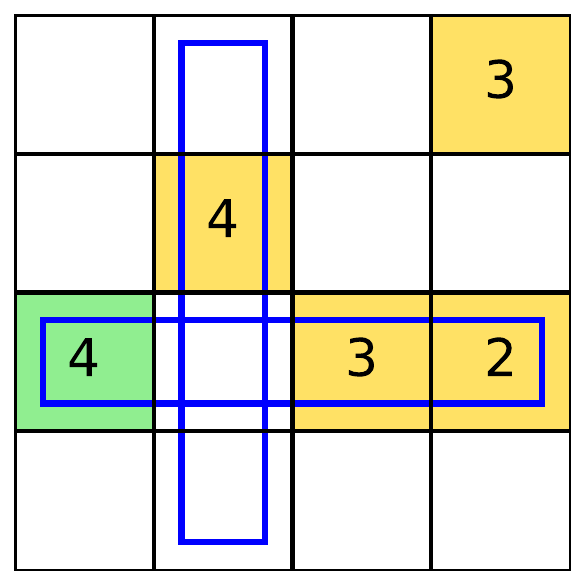}
    \end{minipage}
    \begin{minipage}[b]{\figwidth}
        \centering
        \includegraphics[width=\textwidth]{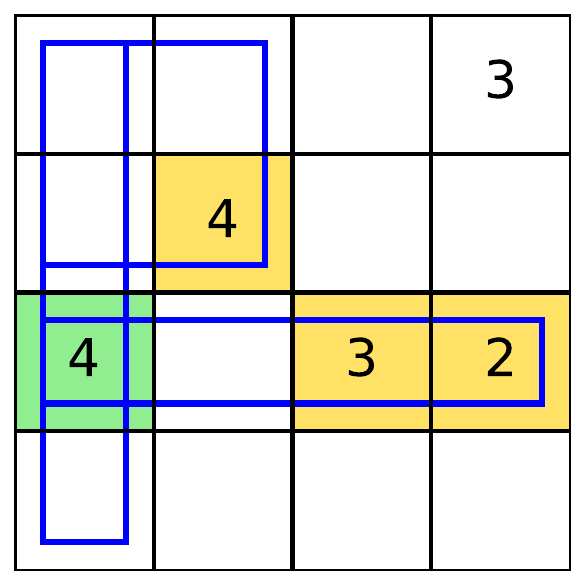}
    \end{minipage}
    \caption{Alternative explanation steps for a 4x4 Sudoku}
    \label{fig:minimize}
\end{figure}

\subsection{Overview of the proposed methods}
\Cref{fig:pipeline} shows the overview of our proposed pipeline, which allows to transform a DRCP-formatted proof log into a step-wise explanation.
The minimal set of processing techniques required is represented as the center line in the diagram.

We first transform the proof log into a user-level proof by removing all proof steps that contain auxiliary variables in the derived constraint using \textbf{proof simplification}. 
Then, we replace all solver-level constraints in the reasons of the proof steps with their user-level constraint it was derived from.
We then \textbf{trim} the proof to remove any redundant steps.
From this smaller proof, we remove all proof steps that derive an intermediate constraint over multiple variables.
Finally, we merge proof steps that have the same set of reasons to a single step, and return this step-wise explanation sequence to the user.
Optionally, we can apply more elaborate \textbf{reason minimization}.
Local or global minimization can be applied either directly after the proof is transformed to the user level, or after the proof is simplified to only contain domain reductions.
In general, we expect better step-wise explanations when minimizing at the end of the pipeline, as after domain reductions are removed, the proof is already a step-wise explanation.
Hence, by applying further minimization, we directly impact the final quality of the step-wise explanation.
In contrast, minimizing before the final simplification stage in the pipeline can be faster as each proof step has fewer reasons to consider during the optional, final minimization.

Interestingly, our approach using \emph{global minimization} at the end of the pipeline shares some similarity with the algorithm of \citet{GBG23EfficientlyExplainingCSPsUnsatisfiableSubsetOptimization}.
To compute the next explanation step in the sequence, they iterate over \emph{each} fact to explain and find an \emph{optimal} explanation using an MUS algorithm.
Then, across all those facts, the optimal explanation is chosen and added to the sequence as a new step.
Crucially, the algorithm of \citet{GBG23EfficientlyExplainingCSPsUnsatisfiableSubsetOptimization} does not scale well when many facts can be explained, as it needs to optimally compute explanations for each remaining fact at every step.
Hence, their method is more suited to a setting where only few possible facts need explaining, such as the unique solution of a logic puzzle.
In contrast, when explaining unsatisfiable models, \textit{any} atomic constraint over any variable is a candidate fact.
Our approach using global minimization at the end of the pipeline essentially determines in one go which \emph{fact} should be explained in which part of the sequence, using the order of derived constraints in the proof, and uses an MUS algorithm only to find an optimal set of reasons to explain those facts in turn.

\newcommand{\figheight}{125pt}
\newcommand{\legendwidth}{91pt}
\begin{figure*}[ht]
    \begin{minipage}[c]{\linewidth-\legendwidth-2pt}
    \includegraphics[height=\figheight]{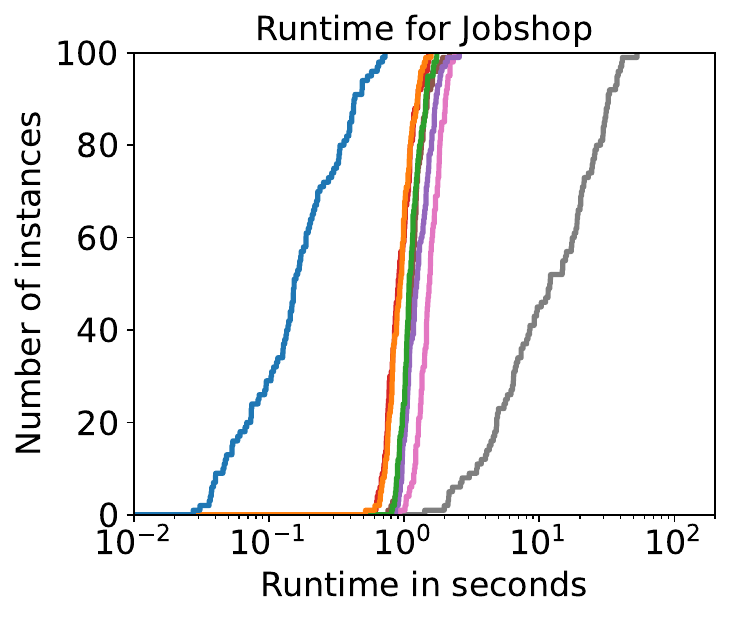}
    \includegraphics[height=\figheight]{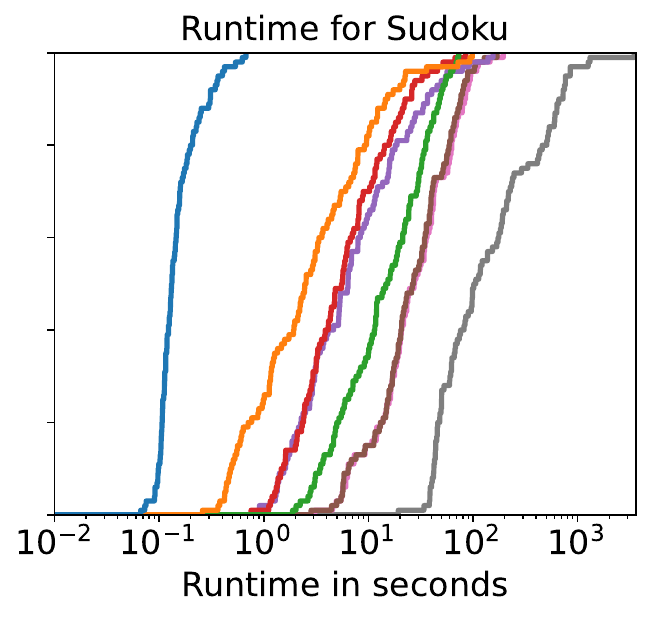}
    \includegraphics[height=\figheight]{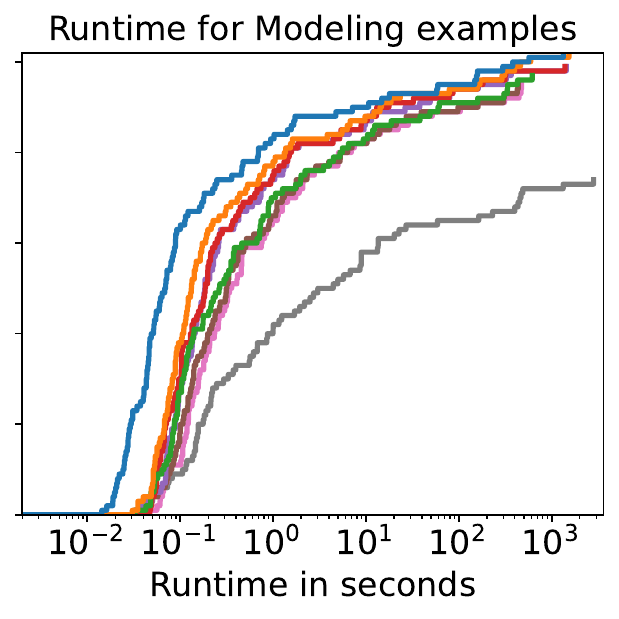}
    \end{minipage}
    \begin{minipage}[c]{\legendwidth}
        \includegraphics[width=\textwidth]{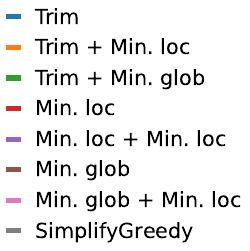}    
    \end{minipage}
    \caption{Runtime of all methods for different benchmarks}
    \label{fig:ecdf}
\end{figure*}

\section{Experimental evaluation \pageplan{Top of page 6}} \label{sec:experiments}
We answer the following experimental questions:
\begin{EQ}
    \item How does the runtime of existing approaches compare to our methods?
    \item How is the runtime of our approach affected by each combination of proof-minimization?
    \item How is the quality of step-wise explanation impacted by each combination of proof-minimization?    
\end{EQ}

\paragraph{Experimental setup}
We implement our approach in Python 3.11 using the \cpmpy~\cite{cpmpy} library v0.9.24.
To generate proofs, we use the Pumpkin LCG-solver on commit \texttt{d730b05}.
For computing MUSes, we employ the SMUS algorithm~\cite{IPLM15SmallestMUSExtractionMinimalHittingSet}, with Exact v2.1.0~\cite{Exact} as a SAT-oracle and Gurobi v.12.0.1 as a hitting set solver.
As hardware, we used a single core of an Intel(R) Xeon(R) Silver 4514Y and 8GB RAM.

\paragraph{Benchmarks}
We use three benchmark families, similar to~\citet{BDGBG23SimplifyingStep-WiseExplanationSequences}.
We use 100 unsatisfiable Sudoku instances (\benchsudoku), 100 job-shop instances with a limited makespan in order to make the model unsatisfiable (\benchjobshop), and a diverse set of 102 example models from the CPMpy and CSP-lib repositories (\benchdebug), which are altered to contain a modeling error, such as an off-by-one error or swapping a comparison.

\paragraph{Methods under investigation}
We compare each variant of our approach corresponding to each of the paths in \Cref{fig:pipeline}: without any optional steps (\approachtrim), with local or global minimization at the end of the pipeline (\approachtrimloc or \approachtrimglob), with local or global minimization instead of simple trimming (\approachloc or \approachglob), with and without local minimization at the end of the pipeline (\approachlocloc and \approachglobloc).
We do not use any other minimization method in combination with global minimization at the end of the pipeline, as it would ignore the dependencies uncovered by the first minimization anyway.
As the current state-of-the-art for computing step-wise explanations of unsatisfiable CSPs, we consider the approach by \citet{BDGBG23SimplifyingStep-WiseExplanationSequences} (\sota).
We measure the wall-clock time taken by the entire process when using each of these methods and answer the experiment questions.

\subsection{EQ1: Comparison of runtime to literature}
\Cref{fig:ecdf} shows a cumulative distribution plot for all of the methods, for each benchmark.
For both the \benchsudoku and \benchjobshop benchmarks, all methods are able to find a step-wise explanation for all instances.
However, compared to \sota, our approach with minimal post-processing (\approachtrim) finds a step-wise explanation ±100x faster.
This clearly shows that the overhead of generating the proof during solving once, is negligible compared to the repeated calls to an NP-oracle as done by \sota.
Indeed, our minimal \approachtrim approach does not require \emph{any} NP-reasoning after the solver has produced the proof, as the DRCP format lists the reasons each step explicitly in the proof file.
In the \benchdebug benchmark, \sota is unable to find a step-wise explanation for all instances, and runs out of time (1h) or memory for 29 of the 102 instances.
For all benchmarks, even our slowest method (\approachglobloc) finds an explanation sequence at least 10 times faster compared to the state of the art.

\begin{table*}[ht]
    \centering
\setlength{\tabcolsep}{1mm}


\small
\begin{tabular}{l|rrrr|rrrr|rrrr}
Benchmark  & \multicolumn{4}{c|}{\textbf{Job-shop (100inst})} & \multicolumn{4}{c|}{\textbf{Sudoku (100 inst)}} & \multicolumn{4}{c}{\textbf{Modeling examples (73 inst)}} \\\midrule
 & \multicolumn{2}{c}{Sequence length} & \multicolumn{2}{c|}{Max stepsize} & \multicolumn{2}{c}{Sequence length} & \multicolumn{2}{c|}{Max stepsize} & \multicolumn{2}{c}{Sequence length} & \multicolumn{2}{c}{Max stepsize} \\
 & Avg (±std) & Med & Avg (±std) & Med & Avg (±std) & Med & Avg (±std) & Med & Avg (±std) & Med & Avg (±std) & Med \\
\midrule
Trim & 24.3 (±7.9) & 25.0 & 2.7 (±2.5) & 1.0 & 85.9 (±37.4) & 90.0 & 7.5 (±7.8) & 4.0 & 11.5 (±17.6) & 5.0 & 3.5 (±5.8) & 1.0 \\
Trim + Min. loc & 2.5 (±4.1) & 1.0 & 1.0 (±0.0) & 1.0 & 82.0 (±36.1) & 83.5 & 4.2 (±4.4) & 2.0 & 8.6 (±15.8) & 3.0 & 2.1 (±2.8) & 1.0 \\
Trim + Min. glob & 2.2 (±2.9) & 1.0 & 1.0 (±0.0) & 1.0 & 62.6 (±35.9) & 57.5 & 3.4 (±3.6) & 1.0 & 8.6 (±16.0) & 3.0 & 2.0 (±2.6) & 1.0 \\
SimplifyGreedy & 2.1 (±2.8) & 1.0 & 1.0 (±0.0) & 1.0 & 37.2 (±22.0) & 33.0 & 1.2 (±0.8) & 1.0 & 7.3 (±13.9) & 2.0 & 1.4 (±0.7) & 1.0 \\
\end{tabular}
\normalsize

\addtolength{\tabcolsep}{0.2em}

    \caption{Quality of final explanation sequence, computed on instances finished by all, shown for non-dominated methods.}
    \label{table:results}
\end{table*}
\subsection{EQ2: Runtime of minimization techniques}
As expected, our approach with minimal processing, \approachtrim, is overall faster compared to any other method we tested.
In general, \textit{local minimization} is faster compared to \textit{global minimization}.
This is not surprising as in global minimization, many more candidate reasons may have to be considered to compute the proof step.
The runtime difference is especially noticeable in the \benchsudoku benchmark, where, for example, \approachtrimloc finds an explanation sequence around 5-10 times faster compared to \approachtrimglob.
Our approach allows us to minimize each proof step either before and/or after simplifying the proof to only contain domain reductions.
In general, it is faster to minimize after this simplification.
This means the simplification of the proof to only contain domain reductions significantly shortens the proof, compared to before.
Indeed, starting with a shorter proof before minimization is faster to process, as for each step, an MUS algorithm has to be called to find a minimal set of reasons.
The difference is clear in the \benchjobshop benchmark 
\ifarxiv
(see \Cref{tab:proofsize:jobshop,tab:proofsize:sudoku,tab:proofsize:debug}),
\fi
as scheduling problems are by nature very disjunctive, and hence, intermediate constraints derived in the proof typically do not describe a domain.

\begin{figure}[t]
    \centering
    \includegraphics[width=0.9\linewidth]{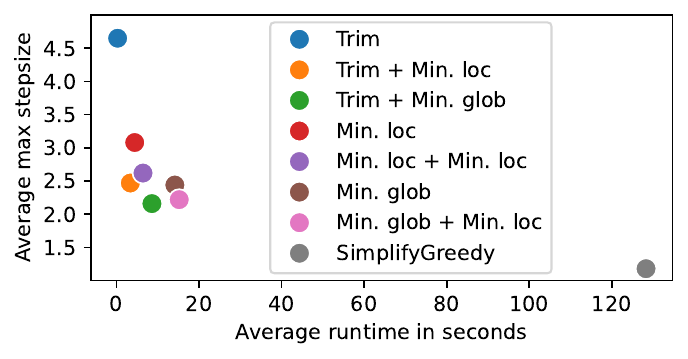}
    \caption{Tradeoff of quality vs runtime on all benchmarks.}
    \label{fig:tradeoff}
\end{figure}

\subsection{EQ3: explanation quality}
\Cref{table:results} lists the quality of a step-wise explanation based on its most difficult step in terms of the number of user-level constraints used, and the overall length of the sequence.
Note these metrics are correlated, as a sequence using larger steps often needs fewer steps overall to prove the conclusion.
\Cref{fig:tradeoff} plots the runtime and explanation quality of each of the proposed methods.
We are interested in methods on the lower left of the plot, i.e., one that is \emph{fast} and computes a \emph{simple} step-wise explanation.
Our methods with reason minimization determine a trade-off of explanation quality and runtime.
We now compare each of these methods in depth.

Overall, our approach without any minimization (\approachtrim) produces the most complex explanation sequences.
For example, in the \benchjobshop benchmark, all other methods are able to find explanation sequences with only a single constraint in each step, whereas \approachtrim needs on average 2.7.
This is also the case for \benchsudoku and \benchdebug where \approachtrim needs on average 7.5 and 3.5 constraints, respectively.

Sequences produced by \approachtrimloc with local minimization at the end of the pipeline use significantly fewer constraints in their most difficult step.
Interestingly, using local minimization instead of simple trimming (\approachloc) also improves the explanation quality for all benchmarks.
This means that in most proofs produced by the solver, the set of reasons provided in the proof contains many redundant steps.
Indeed, while the solver may require many inference steps to derive a particular proof step, this is not the case after minimizing, as we only require the derived constraint to be logically implied.

Our best overall method is the \approachtrimglob method, which produces explanation sequences of similar quality to those by \sota, with a significantly lower runtime.
Note that for \benchsudoku, the higher average complexity of the explanation sequence is mainly due to outliers in the data.
Indeed, for more than half of the explanation sequences, it produces sequences with only a single constraint in each proof step.
Still, on average, \sota finds explanation sequences with smaller steps for \benchsudoku and \benchdebug.
This means there are \textit{facts} that are \textit{easier} to explain compared to those listed in the DRCP proof.

To summarize, while \approachtrim finds an explanation very fast, the final quality is worse than with other methods.
\sota produces the simplest sequences, but at the cost of a significantly higher runtime.
Our method \approachglob, computes step-wise explanations that are often of similar quality as \sota, but does so in a fraction of the time.

\section{Conclusion and Future Work}

We proposed a method for computing step-wise explanations from proofs generated by a certifying constraint solver.
By representing \pformat formatted proofs and step-wise explanation sequences in a common framework of \emph{abstract proofs}, we have shown how to convert a proof designed to be \emph{machine verifiable} into a step-wise explanation for a \emph{human user}.
We have shown several techniques that can remove solver-specific information, such as auxiliary variables and solver-level constraints from the proof, and proposed a set of minimization techniques to further optimize the explanation quality.
Different combinations of these minimization techniques result in different tradeoffs between runtime and quality of the final explanation sequence.
Our best approach produces explanation sequences of similar quality to the state-of-the-art, with 10 times lower runtime.
Our methods can now be applied to larger problems, for which it is unthinkable to use the existing algorithms.

We see two main directions for future work.
Firstly, we would like to further reduce the size of the step-wise explanations computed by our approach.
For example, by finding shorter proofs~\cite{SvdLCGWDShirtShoterShortestProofsUnsatisfiability} or by adapting the solver behavior to find proofs better suited to transform into a step-wise explanation.
Secondly, while the goal of this paper is to find explanations in the already existing format of step-wise explanation sequences, we believe more inspiration from proof logging and proof-theory can be found to improve the actual \emph{explanation}.
E.g., while we remove auxiliary variables as they do not fit the step-wise explanation framework, extra variables may be useful to compress the overall proof.
However, auxiliary variables can be introduced by the modeling system in many different ways.
Therefore, the question of how to integrate auxiliary variables in a user-level explanation is still an open and challenging problem.

\end{section}

\section*{Acknowledgments}

This work is partially funded by the European Union (ERC, CertiFOX, 101122653; ERC, CHAT-Opt, 01002802 and Europe Research and Innovation program TUPLES, 101070149). Views and opinions expressed are however those of the author(s) only and do not necessarily reflect those of the European Union or the European Research Council. Neither the European Union nor the granting authority can be held responsible for them.

This work is also partially funded by the Fonds Wetenschappelijk Onderzoek -- Vlaanderen (projects G064925N and G070521N).

Maarten Flippo is supported by the project ”Towards a Unification of AI-Based Solving Paradigms for Combinatorial Optimisation” (OCENW.M.21.078) of the research programme ”Open Competition Domain Science - M” which is financed by the Dutch Research Council (NWO).

\bibliography{aaai26_used_refs.bib}

\ifarxiv
\setcounter{secnumdepth}{1}

\appendix

\section{Additional examples and proofs}
Below are several examples referenced throughout the main text of the paper, showing some concepts or algorithms in more detail.

\begin{example}[Transforming constraints]\label{example:transform}
Consider the compound constraint $(x + 4 \leq y) \vee (y + 6 \leq x)$.
This constraint is typically not supported by a solver, and can be transformed to simpler constraints $\set{a_1 \implies y - x \geq 4, a_2 \implies x - y \geq 6, a_1 \vee a_2}$ with $a_1$ and $a_2$ newly introduced auxiliary variables.
\end{example}

\newcommand{\derivedone}{x > 0 \vee y < 2 \vee z \geq 1}
\newcommand{\derivedtwo}{x > 0 \vee y < 2 \vee z \leq 0}
\begin{example}[Inference step]\label{example:inference}
Consider the constraint $2x - y + 2z \geq 0$.
A valid inference step that uses this constraint is
$\stepp{\derivedone}{2x - y + 2z \geq 0}$.
This step could be derived after the solver branched on $x \leq 0$ and $y \geq 2$, or when those two atomic constraints became true through other propagations.
\end{example}

\begin{example}[Nogood deriving step]\label{example:nogood}
Consider the derived constraint from \Cref{example:inference} and suppose another inference step derived the clause $\derivedtwo$.
Then, $\stepp{x > 0 \vee y < 2}{\set{\derivedone, \derivedtwo}}$ is a valid nogood-learning step.
\end{example}

\setcounter{lemma}{0}
\begin{lemma}[Validity of user-level proof steps]
    Given a valid proof step $\stepp{\cons}{\reason}$ with $s \in \reason$, and $\scope{\cons} \subseteq \variables$.
    If $\solsto{c}{\variables} \subseteq \solsto{s}{\variables}$, then $\stepp{\cons}{\reason \setminus \set{s} \cup \set{c}}$ is a valid proof step.
\end{lemma}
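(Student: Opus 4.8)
The plan is to prove the statement directly from the definition of a valid abstract proof step. Recall that a proof step $\stepp{\cons}{\reason}$ is valid when $\reason \models \cons$, i.e., every solution of $\reason$ is a solution of $\cons$. Replacing a reason $s$ by the user-level constraint $c$ changes the reason set from $\reason$ to $\reason' \coloneq (\reason \setminus \set{s}) \cup \set{c}$. The goal is to show $\reason' \models \cons$ under the hypotheses that $\solsto{c}{\variables} \subseteq \solsto{s}{\variables}$ and $\scope{\cons} \subseteq \variables$.

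First I would unpack the entailment over the relevant variables. Since the original step is valid, we have $\reason \models \cons$. The key observation is that replacing $s$ by a \emph{stronger} constraint can only shrink the solution set, so $\reason'$ has at most as many solutions as $\reason$ had. Concretely, I would argue that $\sols{\reason'} \subseteq \sols{\reason}$: take any solution $\assignment$ of $\reason'$. Then $\assignment$ satisfies every constraint in $\reason \setminus \set{s}$ and satisfies $c$. The hypothesis $\solsto{c}{\variables} \subseteq \solsto{s}{\variables}$, stated at the level of projections onto $\variables$, tells us that the projection of $\assignment$ onto $\variables$ lies in $\solsto{s}{\variables}$, so there is some assignment agreeing with $\assignment$ on $\variables$ that satisfies $s$. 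Combining this with $\reason \models \cons$ and the fact that $\scope{\cons} \subseteq \variables$ (so satisfaction of $\cons$ depends only on the $\variables$-part of the assignment) yields that $\assignment$ satisfies $\cons$. Hence $\reason' \models \cons$, which is exactly the validity condition for the new step.

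The second thing to check is that $\reason'$ still meets the structural requirement of an abstract proof step, namely that its reasons are drawn from $\constraints \cup \bigcup_{j<i}\cons_j$. Here $c$ is a user-level constraint of the CSP, so $c \in \constraints$, and the remaining reasons $\reason \setminus \set{s}$ were already admissible in the original valid step. Therefore $\reason' \subseteq \constraints \cup \bigcup_{j<i}\cons_j$, and no new obligations are introduced.

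The main obstacle I anticipate is handling the \emph{projection} carefully: the hypothesis is phrased as $\solsto{c}{\variables} \subseteq \solsto{s}{\variables}$ rather than $\sols{c} \subseteq \sols{s}$, because $s$ (a solver-level constraint) may mention auxiliary variables that $c$ does not. The condition $\scope{\cons} \subseteq \variables$ is precisely what lets us restrict attention to $\variables$ and ignore auxiliary variables in the derived constraint. I would make this rigorous by reasoning about assignments restricted to $\variables$ and noting that for any assignment satisfying $c$, its $\variables$-projection can be extended (on the auxiliary variables) to satisfy $s$, while the truth of $\cons$ is unaffected by that extension since $\scope{\cons} \subseteq \variables$. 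Once this projection bookkeeping is set up correctly, the entailment chain goes through and the proof is essentially a one-line set-inclusion argument.
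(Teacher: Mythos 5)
Your overall route is the same as the paper's: the paper also argues semantically, via a chain of three inclusions of projected solution sets --- it rewrites validity as $\solsto{\reason}{\variables} \subseteq \solsto{\cons}{\variables}$ (using $\scope{\cons}\subseteq\variables$), claims $\solsto{\reason \setminus \set{s} \cup \set{c}}{\variables} \subseteq \solsto{\reason}{\variables}$ from the hypothesis, and chains the two. Your pointwise unfolding of those inclusions, together with the structural check that $c \in \constraints$ (which the paper omits, reasonably, since validity of a step only requires $\reason \models \cons$), is a faithful rendering of that argument.

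However, your ``combining'' step is precisely where the argument is not airtight, and your pointwise formulation exposes the hole. From a solution $\assignment$ of $(\reason \setminus \set{s}) \cup \set{c}$ you obtain \emph{some} assignment $\beta$ that agrees with $\assignment$ on $\variables$ and satisfies $s$. But to invoke $\reason \models \cons$ you need a \emph{single} assignment satisfying all of $\reason$: $\assignment$ need not satisfy $s$, and $\beta$ may violate constraints of $\reason \setminus \set{s}$ that mention the auxiliary variables on which $\beta$ and $\assignment$ differ. Concretely, with $\variables = \set{x}$, $x \in \range{0}{6}$, and auxiliary variable $a$, take $s \coloneq (a = 1)$, $c \coloneq (x \geq 0)$, $\reason = \set{a = 1,\ (a = 0) \vee (x = 5)}$, $\cons \coloneq (x = 5)$. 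Then $\reason \models \cons$ and $\solsto{c}{\variables} \subseteq \solsto{s}{\variables}$ (both projections are the full domain of $x$, since any value of $x$ extends with $a = 1$), yet $\set{(a=0) \vee (x=5),\ x \geq 0} \not\models \cons$: set $a = 0$, $x = 0$. So the splice of $\assignment$ and $\beta$ into one assignment only goes through when the auxiliary variables of $s$ do not occur in $\reason \setminus \set{s}$; under that disjointness one can define a hybrid assignment (take $\beta$'s values on $s$'s auxiliaries, $\assignment$'s values elsewhere) satisfying all of $\reason$, and conclude as you do since $\scope{\cons} \subseteq \variables$. To be fair, the paper's own proof glosses over exactly the same point --- its middle inclusion $\solsto{\reason \setminus \set{s} \cup \set{c}}{\variables} \subseteq \solsto{\reason}{\variables}$ fails on the same example, because projection does not commute with conjunction --- and the lemma is rescued in the intended pipeline only because decompositions use fresh auxiliaries and every solver-level constraint of one decomposition is replaced by the same user-level constraint. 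Your proof therefore matches the paper's in approach and inherits its gap; to close it, state the disjointness assumption on $s$'s auxiliary variables and construct the hybrid assignment explicitly.
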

\begin{proof}
Given that $\stepp{\cons}{\reason}$ is a valid proof step, we know that $\reason \models \cons$.
Because $\cons$ does not contain any auxiliary variables, this is equivalent to $\solsto{\reason}{\variables} \subseteq \solsto{\cons}{\variables}$.
Additionally, as $\solsto{c}{\variables} \subseteq \solsto{s}{\variables}$, it holds that $\solsto{\reason \setminus \set{s} \cup \set{c}}{\variables} \subseteq \solsto{\reason}{\variables}$.
Combining this, we get that $\solsto{\reason \setminus \set{s} \cup \set{c}}{\variables} \subseteq \solsto{\cons}{\variables}$ and thus, the rewritten proof step is valid.
\end{proof}

\section{Benchmarks}\label{appendix:benchmarks}
We describe the exact details of the benchmarks used below.
Note that all benchmarks are sized to the limits of the current state-of-the-art by~\cite{BDGBG23SimplifyingStep-WiseExplanationSequences}.
Hence, while combinatorial solvers can solve much larger problems compared to the ones described below, techniques for generating step-wise explanations are currently limited to smaller instances.

\paragraph{\benchsudoku}
We use 100 9x9 Sudoku instances of ``expert'' difficulty generated by the QQwing tool~\cite{qqwing}.
These are modeled using \talldiff constraints, and they are made unsatisfiable by introducing a ``user mistake'' in one of the unfilled cells.
The introduced mistakes are nontrivial, meaning that each row, block, and column still has all distinct values in the given hints.

\paragraph{\benchjobshop}
In jobshop, the goal is to schedule $n$ jobs, each consisting of $m$ individual tasks, on $m$ different machines.
The tasks corresponding to the same job have to be processed in order, and hence, they share a precedence constraint in the model.
No machine can run two tasks at once.
We generate 100 jobshop instances, all with 5 jobs, consisting of 5 individual tasks.
The CP-models use the \tcumulative global constraint, which is usual in scheduling problems.

\paragraph{\benchdebug}
We collect 102 example models from different sources, such as the CSP-lib library~\cite{GW99CSPLIBBenchmarkLibraryConstraints}, the repository of examples of Hakan Kjellerstand\footnote{https://www.hakank.org/cpmpy/}, and the examples included in the \cpmpy repository.
Following the approach from \citet{LT17DebuggingUnsatisfiableConstraintModels}, we introduce a modeling error such as an off-by-one error in a \telement constraint, or swapping a comparison (e.g., swapping $\leq$ for $<$).
The modeling errors are introduced in such a way that each individual constraint in the CSP remains satisfiable.

\section{Example from proof to stepwise explanation}

Consider a simplified jobshop problem with 4 tasks $a,b,c,d$.
Jobs $a$ and $c$ should be ran on machine 1, and jobs $b$ and $d$ on machine 2.
Job $a$ has a duration of 3, job $b$ a duration of 4, job $c$ a duration of 4 and job $d$ a duration of 5.
Additionally, job $a$ should be finished before job $b$ can start, and similarly, job $c$ should finish before the start of job $d$.
We model this problem using 4 integer variables $a,b,c,d$ to represent the start of each task, and the following constraints:
\begin{align*}
    \constraints = \{
         &\consargs{NoOverlap}{(a,3),(c,4)} \\
         &\consargs{NoOverlap}{(b,4),(d,5)} \\
          &a + 3 \leq b \\ 
          &c + 4 \leq d\}
 \end{align*}

In this example, we assume the solver does not support the \constext{NoOverlap} constraint, and thus it must be decomposed.
We use the following decomposition, with $x$ a freshly introduced Boolean auxiliary variable:
\begin{align*}
    \consargs{NoOverlap}{(a,3),(c,4)} \equiv 
        \{\quad&x \implies a + 3 \leq c, \\
          \neg &x \implies c + 4 \leq a\quad\}
\end{align*}

For domains $a,b,c,d \in \range{0}{6}$, the solver produces the following proof 

\newcommand{\proofitem}[2]{\item 
                            \begin{itemize}
                                \item $\reason = #1$
                                \item $\cons = #2$
                            \end{itemize}}

\newcounter{proofcounter}
\setcounter{proofcounter}{0}
\renewcommand{\proofitem}[2]{\stepcounter{proofcounter}
                             \arabic{proofcounter}&
                                $#1$ & $#2$ \\}

\setcounter{proofcounter}{0}
\begin{table}[h]
    \centering
\small
\begin{tabular}{l|l|l}
    Id & Reason & Constraint \\ \midrule
\proofitem{a + 3 \leq b}{(a \leq 3) \vee (b \geq 7)}
\proofitem{1}{a \leq 3}
\proofitem{a + 3 \leq b}{(a \leq -1) \vee (b \geq 3)}
\proofitem{4}{b \geq 3}
\proofitem{\neg x_1 \implies c + 4 \leq a}{(x_1 \geq 1) \vee (a \geq 4) \vee (c \leq -1)}
\proofitem{2,5}{x_1 \geq 1}
\proofitem{x_2 \implies b + 4 \leq d}{(x_2 \leq 0) \vee (b \leq 2) \vee (d \geq 7)}
\proofitem{4,7}{x_2 \leq 0}
\proofitem{x_1 \implies a + 3 \leq c}{(x_1 \leq 0) \vee (a \leq -1) \vee (c \geq 3)}
\proofitem{6,9}{c \geq 3}
\proofitem{\neg x_2 \implies d + 5 \leq d}{(x_2 \geq 1) \vee (b \geq 7) \vee (d \leq 1)}
\proofitem{8,11}{d \leq 1}
\proofitem{c + 4 \leq d}{(c \leq 2) \vee (d \geq 2)}
\proofitem{10,12,13}{\bot}
\end{tabular}
\normalsize
\caption{DRCP-proof for jobshop example}
\end{table}

We then convert the proof to a step-wise explanation sequence by replacing the solver-level constraints with user-level ones, and by simplifying (i.e., removing) steps that derive auxiliary variables, or clauses involving more than one variable.
The resulting proof is the following.

\setcounter{proofcounter}{0}
\begin{table}[h]
    \centering
\small
\begin{tabular}{l|l|l}
    Id & Reason & Constraint \\ \midrule
\proofitem{a + 3 \leq b}{a \leq 3}
\proofitem{a + 3 \leq b}{b \geq 3}
\proofitem{1, \consargs{NoOverlap}{(a,3),(c,4)}}{c \geq 3}
\proofitem{2, \consargs{NoOverlap}{(b,4),(d,5)}}{d \leq 1}
\proofitem{3,4, c + 4 \leq d}{\bot}
\end{tabular}
\normalsize
\caption{Step-wise explanation for jobshop example}
\end{table}

By applying our pipeline to this proof (\approachtrimglob in this case), we obtain the following proof, which is equivalent to an explanation sequence.
Indeed, each step in the proof derives a simple ``fact'' about the domain of variable, and uses a combination of previously derived facts, and user constraints.

\setcounter{proofcounter}{0}
\begin{table}[h]
    \centering
    \begin{tabular}{l|l|l}
    Id & Reason & Constraint \\ \midrule
        \proofitem{a + 3 \leq b}{a \leq 3}
        \proofitem{1, \consargs{NoOverlap}{(a,3),(c,4)}}{c \geq 3}
        \proofitem{3, c + 4 \leq d}{\bot}
    \end{tabular}
    \caption{Minimized explanation for jobshop example}
    \label{tab:my_label}
\end{table}

\newpage
\section{Detailed results}
\Cref{tab:proofsize:jobshop,tab:proofsize:sudoku,tab:proofsize:debug} show the size of the proof after each stage in the pipeline.
\begin{itemize}
    \item ``Proof'' represents the proof as produced by the solver.
    \item ``No aux vars'' represents the proof after simplifying all nogoods involving auxiliary variables from the proof. Note that for both the \benchjobshop and \benchsudoku benchmarks, no auxiliary variables are present in the proofs. Indeed, the Pumpkin solver supports all (global) constraints exactly as modeled in these benchmarks. Thus, no reformulations are required by the modeling system.
    \item ``User cons'' represents the proof after replacing the solver-level constraints with user-level ones. Naturally, this does not alter the set of nogoods in the proof.
    \item ``Minimized 1'' represents the proof after the first minimization phase.
    \item ``Domain reductions'' represents the proof after simplifying any nogood that does not represent a domain reduction (i.e., has more than 1 variable).
    \item ``Minimized 2'' represents the proof after the second, optional minimization phase.
\end{itemize}

\begin{table*}[h]
\centering
\begin{tabular}{l|cccccc}
\toprule
 & Proof & No aux vars & User cons & Minimized 1 & Domain reductions & Minimized 2 \\
\midrule
Trim & 147.6 & 147.6 & 147.6 & 50.7 & 28.0 & 28.0 \\
Trim + Min. loc & 147.6 & 147.6 & 147.6 & 50.7 & 28.0 & 2.6 \\
Trim + Min. glob & 147.6 & 147.6 & 147.6 & 50.7 & 28.0 & 2.2 \\
Min. loc & 147.6 & 147.6 & 147.6 & 3.0 & 2.8 & 2.8 \\
Min. loc + Min. loc & 147.6 & 147.6 & 147.6 & 3.0 & 2.8 & 2.6 \\
Min. glob & 147.6 & 147.6 & 147.6 & 3.2 & 3.0 & 3.0 \\
Min. glob + Min. loc & 147.6 & 147.6 & 147.6 & 3.2 & 3.0 & 2.9 \\
\bottomrule
\end{tabular}
\caption{Size of proof in terms of nogoods after each step in the pipeline for \benchjobshop benchmark}
\label{tab:proofsize:jobshop}
\end{table*}

\begin{table*}[h]
\centering
\begin{tabular}{l|cccccc}
\toprule
 & Proof & No aux vars & User cons & Minimized 1 & Domain reductions & Minimized 2 \\
\midrule
Trim & 470.5 & 470.5 & 470.5 & 175.4 & 171.4 & 171.4 \\
Trim + Min. loc & 470.5 & 470.5 & 470.5 & 175.4 & 171.4 & 154.3 \\
Trim + Min. glob & 470.5 & 470.5 & 470.5 & 175.4 & 171.4 & 96.9 \\
Min. loc & 470.5 & 470.5 & 470.5 & 166.0 & 162.9 & 162.9 \\
Min. loc + Min. loc & 470.5 & 470.5 & 470.5 & 166.0 & 162.9 & 155.4 \\
Min. glob & 470.5 & 470.5 & 470.5 & 109.6 & 108.0 & 108.0 \\
Min. glob + Min. loc & 470.5 & 470.5 & 470.5 & 109.6 & 108.0 & 102.1 \\
\bottomrule
\end{tabular}
\caption{Size of proof in terms of nogoods after each step in the pipeline \benchsudoku benchmark}
\label{tab:proofsize:sudoku}
\end{table*}

\begin{table*}[h]
\centering
\begin{tabular}{l|cccccc}
\toprule
 & Proof & No aux vars & User cons & Minimized 1 & Domain reductions & Minimized 2 \\
\midrule
Trim & 229.7 & 77.9 & 77.9 & 37.0 & 17.3 & 17.3 \\
Trim + Min. loc & 229.7 & 77.9 & 77.9 & 37.0 & 17.3 & 13.1 \\
Trim + Min. glob & 229.7 & 77.9 & 77.9 & 37.0 & 17.3 & 12.8 \\
Min. loc & 229.7 & 77.9 & 77.9 & 17.0 & 14.0 & 14.0 \\
Min. loc + Min. loc & 229.7 & 77.9 & 77.9 & 17.0 & 14.0 & 13.1 \\
Min. glob & 229.7 & 77.9 & 77.9 & 18.4 & 14.7 & 14.7 \\
Min. glob + Min. loc & 229.7 & 77.9 & 77.9 & 18.4 & 14.7 & 13.8 \\
\bottomrule
\end{tabular}
\caption{Size of proof in terms of nogoods after each step in the pipeline for \benchdebug benchmark}
\label{tab:proofsize:debug}
\end{table*}

\fi

\end{document}